\documentclass{article}

\usepackage{graphicx}
\usepackage{subfigure}
\usepackage{amsmath,amssymb}

\usepackage{latexsym}
\usepackage{verbatim}
\usepackage[usenames,dvipsnames]{color}
\usepackage{wrapfig}
\usepackage{float,times}
\usepackage{amssymb}
\usepackage{stackengine}

\addtolength{\oddsidemargin}{-.5in}
\addtolength{\evensidemargin}{-.5in}
\addtolength{\textwidth}{1in}

\begin{document}

\title{The Prediction Advantage: A Universally Meaningful Performance Measure for Classification and Regression}

%

\author{
	Ran El-Yaniv\\
	Technion – Israel Institute of Technology\\
	\texttt{rani@cs.technion.ac.il} \\
		Yonatan Geifman\\
	Technion – Israel Institute of Technology\\
	\texttt{yonatang@cs.technion.ac.il} \\
	Yair Wiener\\
	Jether Energy Research\\
	\texttt{yair@jether-energy.com} \\
}
\maketitle





%



\newtheorem{theorem}{Theorem}[section]
\newtheorem{lemma}[theorem]{Lemma}

\newenvironment{proof}[1][Proof:]{\begin{trivlist} \item[\hskip \labelsep
{\bfseries #1}]}{\hfill{$\Box$}\end{trivlist}}

\begin{abstract}
	We introduce the \emph{Prediction Advantage} (PA), a novel performance measure 
	for prediction functions under any loss function (e.g., classification or regression).
	The PA is defined as the performance advantage relative to 
	the Bayesian risk restricted to knowing only the distribution of the labels.
	We derive the PA for well-known loss functions, including  0/1 loss, cross-entropy loss,
	absolute loss, and squared loss. In the latter case, the PA is identical to the 
	well-known $R$-squared measure, widely used in statistics.
	The use of the PA ensures meaningful quantification of prediction performance, which is not guaranteed, for example, when dealing with noisy imbalanced classification  problems.
	We argue that among 
	several known alternative performance measures, PA is the best (and only) quantity  
	ensuring meaningfulness for all noise and imbalance levels.
\end{abstract}

%
%
%
%


\newcommand{\rnote}[1]{ {\color{red}  [ Ran: \textit{#1}]} }
\newcommand{\ynote}[1]{ {\color{blue} [ Yonatan: \textit{#1}]} }
\newcommand{\sharon}[1]{ {\color{blue} {#1}} }

\newcommand{\argmax}{\mathop{\mathrm{argmax}}\limits}
\newcommand{\argmin}{\mathop{\mathrm{argmin}}\limits}
\newcommand{\maxlim}{\mathop{\mathrm{max}}\limits}
\newcommand{\vect}[1]{\mathbf{#1}}
\newcommand{\one}{\mathbf{1}}

\newcommand{\VS}{\mathcal{VS}} 

\newcommand{\data}[1]{\texttt{#1}}
\newcommand{\tdata}[1]{\scriptsize{\texttt{#1}}}
\newcommand{\COIL}{\data{COIL}}
\newcommand{\MUSH}{\data{MUSH}}
\newcommand{\MUSK}{\data{MUSK}}
\newcommand{\PIMA}{\data{PIMA}}
\newcommand{\BUPA}{\data{BUPA}}
\newcommand{\VOTING}{\data{VOTING}}
\newcommand{\MONK}{\data{MONK}}
\newcommand{\IONO}{\data{IONOSPHERE}}
\newcommand{\TAE}{\data{TAE}}
\newcommand{\DIGIT}{\data{DIGIT}}
\newcommand{\TEXT}{\data{TEXT}}

\newcommand{\sCOIL}{\tdata{COIL}}
\newcommand{\sMUSH}{\tdata{MUSH}}
\newcommand{\sMUSK}{\tdata{MUSK}}
\newcommand{\sPIMA}{\tdata{PIMA}}
\newcommand{\sBUPA}{\tdata{BUPA}}
\newcommand{\sVOTING}{\tdata{VOTING}}
\newcommand{\sMONK}{\tdata{MONK}}
\newcommand{\sIONO}{\tdata{IONOSPHERE}}
\newcommand{\sTAE}{\tdata{TAE}}
\newcommand{\sDIGIT}{\tdata{DIGIT}}
\newcommand{\sTEXT}{\tdata{TEXT}}

\newcommand{\myalg}[1]{\texttt{#1}}
\newcommand{\mytalg}[1]{\scriptsize \texttt{#1}}
\newcommand{\SGT}{\myalg{SGT}}
\newcommand{\ZHU}{\myalg{GRMF}}
\newcommand{\BELKIN}{\myalg{SOFT}}
\newcommand{\SCHOLKOPF}{\myalg{CM}}
\newcommand{\EXPP}{\myalg{+EXPLORE}}
\newcommand{\sEXPP}{\mytalg{+EXPLORE}}
\newcommand{\sZHU}{\mytalg{STRICT}}
\newcommand{\sBELKIN}{\mytalg{SOFT}}
\newcommand{\sSCHOLKOPF}{\mytalg{RANGE}}
\newcommand{\sSGT}{\mytalg{SGT}}

\newcommand{\ENG}{\myalg{ENERGY}} 

\newcommand{\KNN}{\myalg{kNN}}

\newcommand{\comp}[1]{\small \texttt{#1}}
\newcommand{\QEXPLORE}{\comp{Q-EXPLORE}}
\newcommand{\QEXPLOIT}{\comp{Q-EXPLOIT}}
\newcommand{\EXPPSWITCH}{\comp{EXPLORE-EXPLOIT-SWITCH}}

\newcommand{\cA}{{\cal A}}
\newcommand{\cB}{{\cal B}}
\newcommand{\cF}{{\cal F}}
\newcommand{\cG}{{\cal G}}
\newcommand{\cH}{{\cal H}}
\newcommand{\cL}{{\cal L}}
\newcommand{\cV}{{\cal V}}
\newcommand{\cX}{{\cal X}}
\newcommand{\hL}{{\widehat{L}}}
\newcommand{\hcL}{{\widehat{\mathcal{L}}}}
\newcommand{\hR}{\widehat{R}}
\newcommand{\bB}{\mathbf{B}}
\newcommand{\bW}{\mathbf{W}}
\newcommand{\bR}{\mathbf{R}}
\newcommand{\bD}{\mathbf{D}}
\newcommand{\bG}{\mathbf{G}}
\newcommand{\bL}{\mathbf{L}}
\newcommand{\bI}{\mathbf{I}}
\newcommand{\bC}{\mathbf{C}}
\newcommand{\bZ}{\mathbf{Z}}
\newcommand{\ba}{\mathbf{a}}
\newcommand{\bd}{\mathbf{d}}
\newcommand{\be}{\mathbf{e}}
\newcommand{\bg}{\mathbf{g}}
\newcommand{\Bf}{\mathbf{f}}
\newcommand{\Bg}{\mathbf{g}}
\newcommand{\bh}{\mathbf{h}}
\newcommand{\bp}{\mathbf{p}}
\newcommand{\bq}{\mathbf{q}}
\newcommand{\bt}{\mathbf{t}}
\newcommand{\bu}{\mathbf{u}}
\newcommand{\bv}{\mathbf{v}}
\newcommand{\bx}{\mathbf{x}}
\newcommand{\by}{\mathbf{y}}
\newcommand{\bz}{\mathbf{z}}
\newcommand{\E}{\mathbf{E}}
\newcommand{\var}{\text{var}}
\renewcommand{\H}{\mathbf{H}}
\renewcommand{\S}{\mathbf{S}}
\newcommand{\T}{\mathbf{T}}
\newcommand{\CM}{\mathtt{CM}}
\newcommand{\CMRAD}{\mathtt{CM-SUP}}
\newcommand{\eqdef}{\mathrel{\ensurestackMath{\stackon[1pt]{=}{\scriptstyle\Delta}}}}

\newcommand{\nchoosek}[2]{\left(\begin{array}{c}#1\\#2\end{array}\right)}
\renewcommand{\Pr}{\mathbf{Pr}}
\newcommand{\head}{\mathrm{head}}
\newcommand{\tail}{\mathrm{tail}}
\newcommand{\rad}{\mathtt{R}}
\newcommand{\parr}{\mathtt{Par}}
\newcommand{\tA}{\mathtt{A}}
\newcommand{\bpi}{\pmb{\pi}}
\newcommand{\cN}{{\mathcal N}}
\newcommand{\cY}{{\mathcal Y}}
\newcommand{\rE}{\mathbf{E}}
\newcommand{\pig}{\tilde\pi_{\gamma/k}}
\newcommand{\al}{\alpha}
\newcommand{\QED}{\hfill{$\Box$}}
\newcommand{\lam}{\lambda}
\newcommand{\err}{\mathop{\rm er}}

\newcommand{\I}{\mathbb{I}}
\newcommand{\hf}{f_Q}
\newcommand{\hg}{\hat{g}}
\newcommand{\LA}{{\mathcal L }}
\newcommand{\GLi}{G_L^{(i)}}
\newcommand{\GUi}{G_U^{(i)}}
\newcommand{\fail}{\texttt{fail}}
\newcommand{\reals}{\mathbb{R}}

\section{Introduction}
\label{sec:introduction}

Consider the task of training a classifier on a binary problem with a very small minority class 
whose proportion is 1\%. A machine learning intern generates a classifier whose test accuracy was 97\%. Proud of this excellent result, the intern reports to his boss and is promptly fired. 
Many of us have encountered this embarrassing situation where, in retrospect, the classifier 
we have worked hard to train, turns out to be no better than the trivial classifier that 
labels everything as the majority class (achieving 99\% accuracy in our example).

While this example was intentionally contrived, it occurs widely, even with less extreme imbalance.
For example, the proportion of the minority class in the well-known UCI Haberman dataset 
\cite{Lichman:2013}
is 26.47\% and yet it fails even experienced ML researchers 
\cite{wiener2011agnostic,huang2007correcting,xiong2012random,chandra2009fuzzifying,ghanavati2014effective,mcconnell2004building,juhola2013missing,chandra2006robust,alabdulmohsin2014support}.  In all these publications, the authors have reported the (positive) results
of a classifier for the Haberman set whose error is near or worse than the trivial 26.47\% error
(see more details in Appendix~\ref{app:haberman}).
Prediction under other loss functions 
(not the 0/1 loss) is also susceptible to a qualitatively similar problem. 

Known partial remedies for handling imbalanced datasets include 
the $F$-measure, true positive/negative rates, and Cohen's kappa.
These measures can sometimes detect trivial classifiers, but not always. They are also 
not general in the sense that they are not defined for any desired loss function.
For example, Cohen's kappa is defined only for the 0/1 loss (i.e., classification).
Moreover, each of these known measures 
can still fail a machine learning practitioner in some problems.
In the context of squared loss regression, the well-known $R^2$ measure provides an effective 
solution (and, as will be shown below, $R^2$ is a special case of our measure).

In this paper we propose a single performance measure, potentially applicable to any loss function.
Consider a learning problem defined in terms of a feature space $\cX$, a label space $\cY$,  a  distribution $P(X,Y)$, where $X \in \cX$, $Y \in \cY$, and 
a loss function $\ell : \cY \times \cY \to \reals^+$.
To measure the performance of a prediction function $f$, the proposed measure considers the advantage of the risk $R_{\ell}(f) \eqdef \E_{X,Y} \{\ell(f(X),Y) \}$ over the risk of the best  prediction given knowledge
of the marginal $P(Y)$, 
which we term the \emph{Bayesian marginal prediction} (BMP) risk. 
The new measure is termed the \emph{prediction advantage} (PA).
We derive the PA for the most popular loss functions, namely, 0/1 loss, cross-entropy loss, squared loss, absolute loss, and general cost-sensitive loss.
We then argue that the PA measure always prevents triviality whereas each of the other known alternative measures can fail.

A striking benefit of the
PA is that it also enables performance comparisons across learning problems, e.g.,
in classification problems
that differ in the number of classes (and imbalance rate).
To understand this effect, consider the following story.
The intern from our first vignette becomes a university professor
and composes a multiple choice  exam with 100 questions for his machine learning class. 
To prevent cheating, he writes two versions of the exam, both containing precisely the same questions
but version A has 3 optional answers for each question and version B, 4 optional answers.
Bob and Alice write the exam and receive versions  A and B, respectively. 
Their exam sheets are graded and both get a grade of 30. It is easy to see that Bob's grade is 
worse than the trivial 33.33\% mark that can be achieved for a guess (for version A), while
Alice's is better than the corresponding trivial 25\% mark for version B. Using PA, however, not only does the PA detect Bob's trivial achievement, it also allows us to quantify 
his performance compared Alice's. For example, it enables us to determine how much better Alice's performance is than Bob's
if they both got a mark of 60\% (see analysis in Section~\ref{sec:experiments}).

\section{Prediction Advantage}
\label{sec:predictionadvantage}

In this section we introduce the Bayesian marginal prediction (BMP), defined to be the optimal 
prediction in hindsight given the marginal distribution of the labels, $P(Y)$.
For any given loss function, the risk of the BMP is taken as a reference point for meaningless prediction, which we would like outperform. To this end, we define the prediction advantage (PA) as the (additive) reciprocal of
the performance ratio between our prediction function and the BMP.
We then instantiate the PA to several important loss functions.


\subsection{Bayesian Marginal Prediction}

We define the Bayesian Marginal Prediction function (BMP) to be the optimal prediction function with respect to the marginal distribution of $Y$, $P(Y)$, and denote it as 
$f_0$.\footnote{In the case that the BMP is not unique, we will choose it arbitrarily among the optimal, since we are interested in its risk.} The BMP predicts a constant value/class while being oblivious
to $X$ and $P(Y|X)$. In this way we expect the BMP to obtain only the complexity of the problem latent in $P(Y)$. 
We denote by $\cB$ the set of all probability distributions over $\cY$. Clearly, 
any BMP must reside in $\cB$. 

For any prediction function $g \in \cB$, and any loss function $\ell$,  we have 
$R_\ell(g) = \E_Y \ell(Y, g)$.  This can be easily established by noting that $g$ is independent of $X$:
$$
\E_{X,Y} \ell(Y, g) = \int_{X,Y} \ell(Y, g) d P(X,Y) = \int_Y  \ell(Y, g) \int_X   d P(X,Y) 
= \int_Y  \ell(Y, g) d P(Y) = \E _Y  \ell(Y, g).
$$
We will use this simple relation throughout the paper.

By Yao's principle \cite{Yao:1977:PCT:1382431.1382556,borodin1999randomization}
(which follows from von Neumann's minimax theorem), 
we can restrict attention only to deterministic BMPs (i.e., constants).
For self-containment, we provide a direct proof of this statement for the case where the loss function is convex in its second argument.
\begin{lemma}
	\label{lemma:constant}
	Let $\ell(r,s)$ be a loss function convex with respect to $s$.
	There exists a constant prediction function $f_0 \in \cB$ such that 
	$$
	R_{\ell}(f_0) \leq \min_{g \in \cB} R_{\ell}(g).
	$$
\end{lemma}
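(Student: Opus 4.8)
The plan is to reduce an arbitrary (randomized) marginal predictor $g\in\cB$ to a deterministic constant by \emph{averaging}, and then apply Jensen's inequality. Recall that an element $g\in\cB$ is a distribution over $\cY$, so the loss it incurs against a label $r$ is the expected loss $\E_{s\sim g}\,\ell(r,s)$, and hence, by the marginal‑predictor identity $R_\ell(h)=\E_Y\,\ell(Y,h)$ established just above the lemma, $R_\ell(g)=\E_Y\E_{s\sim g}\,\ell(Y,s)$. A constant prediction function is the special case $g=\delta_{s_0}$, a point mass at some $s_0\in\cY$, with $R_\ell(\delta_{s_0})=\E_Y\,\ell(Y,s_0)$. (The general, slicker route here is Yao's principle / von Neumann minimax, but the lemma asks for a self‑contained argument using only convexity, which is what follows.)

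First I would fix $g\in\cB$ — eventually one that attains, or nearly attains, the minimum in the statement — and define its barycenter $\bar s\eqdef\E_{s\sim g}[s]$. Here I am making explicit the structure implicit in the phrase ``convex with respect to $s$'': $\cY$ is a convex subset of a real vector space (e.g.\ $\cY\subseteq\reals$ or $\reals^d$) on which $\ell(r,\cdot)$ is convex, and the expectation defining $\bar s$ exists, so that $\bar s\in\cY$. Then, for every fixed label value $r$, Jensen's inequality for the convex map $s\mapsto\ell(r,s)$ gives
$$
\ell(r,\bar s)=\ell\!\left(r,\E_{s\sim g}[s]\right)\;\le\;\E_{s\sim g}\,\ell(r,s).
$$
Taking expectation of both sides over $Y\sim P(Y)$ and using the marginal‑predictor identity again yields
$$
R_\ell(\bar s)=\E_Y\,\ell(Y,\bar s)\;\le\;\E_Y\E_{s\sim g}\,\ell(Y,s)=R_\ell(g),
$$
so the deterministic predictor $f_0\eqdef\bar s\in\cB$ is at least as good as $g$.

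To finish: if the minimum $\min_{g\in\cB}R_\ell(g)$ is attained by some $g^\star$, applying the above with $g=g^\star$ gives $R_\ell(f_0)\le R_\ell(g^\star)=\min_{g\in\cB}R_\ell(g)$, and the reverse inequality is automatic because $f_0\in\cB$. If one prefers not to assume attainment, I would take a minimizing sequence $g_n$ with $R_\ell(g_n)\to\inf_{g\in\cB}R_\ell(g)$, form barycenters $\bar s_n$, and either pass to a convergent subsequence using compactness of a bounded region of $\cY$ together with continuity of $\ell$, or simply observe that the displayed inequality applied to each $g_n$ already forces $\inf_{s_0\in\cY}R_\ell(\delta_{s_0})\le\inf_{g\in\cB}R_\ell(g)$.

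The inequality itself is a one‑line Jensen argument; the real work — and the main obstacle — is the structural bookkeeping that makes it legitimate: convexity of $\cY$ so that $\bar s$ genuinely lies in $\cY$, integrability so that $\bar s=\E_{s\sim g}[s]$ is well defined, and (only if an actual minimizer is demanded rather than an infimum) a mild compactness/continuity hypothesis guaranteeing the optimum over $\cB$ is achieved. In the setting where ``convex in the second argument'' is a meaningful assumption these all hold, so the proof is essentially immediate.
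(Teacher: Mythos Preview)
Your argument is correct and is essentially the same as the paper's: fix a randomized marginal predictor $g\sim Q$, form its barycenter $\bar s=\E_Q[g]$, and apply Jensen's inequality to the convex map $s\mapsto\ell(Y,s)$ inside the outer expectation over $Y$ to get $R_\ell(\bar s)\le R_\ell(g)$. If anything, you are more careful than the paper about the structural side conditions (convexity of $\cY$, integrability, and attainment of the minimum), which the paper leaves implicit.
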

\begin{proof}
	Let $g$ be any function in $\cB$ defined with respect to the distribution $Q$ over $\cY$.
	Clearly,
	$R_\ell(g)= \E_{Y}  \E_Q \{  \ell(Y,g(X))  \}$.
	Using the Jensen inequality we obtain the desired bound as follows:
	\begin{eqnarray*}
		R_\ell(g)&=&\E_{Y}\E_{Q}\ell(Y,g)\\
		&\geq&\E_{Y}\ell(Y,\E_{Q}\{g\})\\
		&=&R_\ell(\E_{Q}\{g\}).
	\end{eqnarray*}
	Thus, we have shown that for any $g = g_Q \in \cB$, the risk of the constant 
	prediction $\E_{Q}\{g\}$ is no worse than  $R_\ell(g)$.
	This also holds for the best prediction function in $\cB$.
\end{proof}
For optimality, the BMP will be defined explicitly according to a specific loss function. We will define it and prove its optimality for several loss functions.

\subsection{The Prediction Advantage}
\label{sec:prediction advantage}

After defining the BMP, we will use its risk as the baseline for our measure. The Prediction Advantage (PA) of a prediction function $f$ is defined to be the advantage of the expected performance of $f$ over the BMP:
$$\textnormal{PA}_\ell (f)\eqdef1-\frac{R_\ell(f)}{R_\ell(f_0)}=1-\frac{E_{X,Y}(\ell(f(X),Y))}{E_{X,Y}(\ell(f_0(X),Y)}.$$

The following are basic properties of the PA measure:
\begin{enumerate}
	\item \textbf{Order preservation:} The PA forms a weak ordering of the functions $f \in \cF$ (given any function class 
	$\cF$ for our problem). Since the prior probabilities over the classes of a problem are constant, the PA preserves an order that is inverse to the order formed by the risk.
	$$
	\textnormal{PA}_\ell(f_1)>\textnormal{PA}_\ell(f_2) \textnormal{ iff } R_\ell(f_1)<R_\ell(f_2).
	$$
	\item \textbf{Boundedness:} $\sup_{f\in F}(\textnormal{PA}_\ell(f))=1$; the PA is bounded by 1. Notice that $\textnormal{PA}_\ell(f)=1$ is obtained only when the $R_\ell(f)=0$.
	\item \textbf{Meaningfulness:} $\textnormal{PA}_\ell(f)=0$  when $f$ has no advantage over the BMP. $\textnormal{PA}_\ell(f)<0$ when $f$ is worse than the BMP. A function with a negative PA is meaningless in the sense that one can use the (empirical) BMP, potentially reaching better results.
	
\end{enumerate}

We emphasize that throughout the paper we consider the ``true prediction advantage'', which corresponds to the underlying \emph{unknown} distribution.  The empirical version of 
the PA is straightforwardly defined. To calculate the empirical PA, 
all we need to do is to estimate the BMP (using appropriate estimators) based on a labeled 
sample. Confidence levels for these 
estimates can be obtained using standard concentrations of measure bounds.
For simplicity, we deliberately ignore in this paper the empirical PA, and these estimation problems.

\subsection{Prediction Advantage for the Cross-Entropy Loss}
\label{sec:pa for cross-entropy}
We now look at a multi-class classification problem using the cross-entropy loss defined as, $\ell(f(X),Y)=-\sum_{i\in C}{\Pr\{Y=i\}\log{(\Pr\{f(X)=i\})}}.$
This loss function is extensively used in deep learning and other multi-class learning frameworks.
Having defined the PA in Section~\ref{sec:prediction advantage}, in this section we identify the 
BMP and present an explicit expression for the PA in this setting.
Let $C \eqdef \{ 1, 2, \ldots, k\}$ be the set of the classes and consider a multi-class
classifier, $f(x): \cX \to \reals^k$, whose $i$th coordinate, $f(x)_i$, attempts to predict the probability that an instance $x$ belongs to class $i \in C$; that is, $f(x)_i \eqdef \Pr \{Y = i | x \}$. 
For simplicity, we assume from now on that the label $Y$ is represented in unary; for example,
the label  $Y=i$ is represented by the unit vector $e_i$ (also known as ``one-hot'' encoding). In this setting the cross-entropy loss is defined as $\ell(f(x),y)\eqdef -\sum_{i\in C}y_i\cdot\log(f(x)_i))$.

\begin{lemma}[BMP for cross-entropy loss]
	\label{lemma:crossentropy}
	Let $f_0 = f_0(X)$ be the vector whose $i$th coordinate is $f_0(X)_i\eqdef \Pr\{Y=e_i\}$. 
	Then $f_0$ is the BMP for the multi-class classification under the cross-entropy loss. 
	Moreover, the BMP risk is the Shannon entropy of $P(Y)$, the marginal distribution of Y.
\end{lemma}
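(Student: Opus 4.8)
The plan is to reduce the search for the BMP to constant predictions and then recognize the resulting optimization problem as an instance of Gibbs' inequality.

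First I would check that Lemma~\ref{lemma:constant} applies. The cross-entropy loss $\ell(r,s) = -\sum_{i \in C} r_i \log s_i$ is convex in its second argument $s$: each map $s_i \mapsto -\log s_i$ is convex, and $\ell(r,\cdot)$ is a nonnegative combination of these (with weights $r_i \geq 0$). Hence, by Lemma~\ref{lemma:constant}, it suffices to minimize the risk over constant functions $g \in \cB$, i.e., over probability vectors $g = (g_1,\dots,g_k)$ on $C$. Next I would evaluate the risk of such a constant. Writing $p_i \eqdef \Pr\{Y = e_i\}$ and using the relation $R_\ell(g) = \E_Y \ell(Y,g)$ recalled in the excerpt, together with the one-hot encoding (so $\E Y_i = p_i$) and linearity of expectation,
$$R_\ell(g) = \E_Y\left[-\sum_{i\in C} Y_i \log g_i\right] = -\sum_{i\in C} p_i \log g_i,$$
which is precisely the cross-entropy between the label distribution $P(Y) = (p_1,\dots,p_k)$ and $g$.

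The crux is then to minimize $-\sum_i p_i \log g_i$ over the simplex. I would invoke Gibbs' inequality: for any distribution $g$ on $C$,
$$-\sum_{i\in C} p_i \log g_i \;-\; \left(-\sum_{i\in C} p_i \log p_i\right) \;=\; \sum_{i\in C} p_i \log\frac{p_i}{g_i} \;=\; D(p\|g) \;\geq\; 0,$$
with equality iff $g = p$. If a self-contained argument is wanted, this is a one-line application of Jensen's inequality to the concave function $\log$: $\sum_i p_i \log(g_i/p_i) \leq \log \sum_i g_i = 0$. Consequently the minimum over $\cB$ is attained at $g = p$; that is, the BMP is $f_0$ with $f_0(X)_i = \Pr\{Y = e_i\}$, and its risk equals $-\sum_i p_i \log p_i$, the Shannon entropy of $P(Y)$. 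Combined with the first step, this proves both assertions.

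The only delicate point — more a bookkeeping matter than a genuine obstacle — is the treatment of zero probabilities: if $g_i = 0$ while $p_i > 0$ the risk is $+\infty$, so such $g$ are never minimizers, and terms with $p_i = 0$ are read with the convention $0\log 0 = 0$; with these conventions the Gibbs/KL computation above goes through verbatim.
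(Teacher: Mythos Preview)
Your proof is correct and follows essentially the same route as the paper: both compute the risk of an arbitrary constant prediction as the cross-entropy $-\sum_i p_i\log g_i$ and then invoke the nonnegativity of the Kullback--Leibler divergence (Gibbs' inequality) to conclude that the minimizer is $g=p$ with risk $H(P(Y))$. Your version is slightly more careful in that you explicitly verify convexity to justify the reduction to constants via Lemma~\ref{lemma:constant} and you address the $0\log 0$ convention, whereas the paper proceeds directly to the KL computation.
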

\begin{proof}
	Consider an arbitrary marginal classifier $f_Q(X)$ defined with respect to the 
	probability mass function $Q$ whose support set is the set of classes 
	$C$; that is,  $f_Q(X) \sim Q$. We prove that $R_\ell(f_0) \leq R_\ell(f_Q)$.  
	Applying the cross-entropy loss on  $f_0$, we have
	\begin{eqnarray*}
		R_{\ell}(f_0)&=&\E\ell(f_0(X),Y)\\
		&=&\sum_{i\in C}\Pr\{Y=e_i\}\ell(f_0(X),e_i)\\
		&=&\sum_{i\in C}{-\Pr\{Y=e_i\}\log{(\Pr\{Y=e_i\})}}\\
		&=&H(Y).
	\end{eqnarray*}
	The risk of  $f_Q$ is
	\begin{eqnarray*}
		R_{\ell}(f_Q)&=&\E\ell(f_Q(X),Y)\\
		&=&\sum_{i\in C}\Pr\{Y=e_i\}\ell(f_Q(X),e_i)\\
		&=&\sum_{i\in C}{-\Pr\{Y=e_i\}\log{(f_{Qi}(X))}}.
	\end{eqnarray*}
	Using the non-negativity of the Kullback-Leibler divergence, we show that the difference 
	$R_{\ell}(f_Q)-R_{\ell}(f_0)$ is not negative (meaning that $f_0$ is optimal):
	\begin{eqnarray*}
		R_{\ell}(f_Q)-R_{\ell}(f_0)&=&\sum_{i\in C}{-\Pr\{Y=e_i\}\log{(f_{Qi})}}+\sum_{i\in C}{\Pr\{Y=e_i\}\log{(\Pr\{Y=e_i\})}}\\
		&=&\sum_{i\in C}{\Pr\{Y=e_i\}\log{(\Pr\{Y=e_i\}/f_{Qi}(X))}}\\
		&=&D_{kl}(f_0(X)||f_Q(X))\\
		&\geq& 0.
	\end{eqnarray*}
\end{proof}

By Lemma~\ref{lemma:crossentropy}, the risk of the BMP  in the cross-entropy loss setting is
$R_{\ell}(f_0)=H(P(Y))$, and therefore,
the Prediction Advantage for the cross-entropy loss is
$$
\textnormal{PA}_\ell (f)\eqdef1-\frac{R_\ell(f)}{H(P(Y))} .
$$

\subsection{Prediction Advantage for 0-1 Loss}
Consider a 0-1 multi-class classification problem in which the set of classes is $C \eqdef \{ 1, 2, \ldots, k\}$.
In this case (and in contrast to the cross-entropy case discussed above), the classifier $f$
predicts  a nominal value, $f(x) : \cX \to C$. 
The 0-1 loss is defined as $\ell(\hat{y},y)=I(y\neq\hat{y})$, where $I$ is the indicator function. 
Clearly, when the 0-1 loss function is used, the resulting risk equals the probability of misclassification.
To derive the PA for this setting, we now show that the BMP strategy is
$f_0\eqdef\argmax_i(\Pr\{Y=i\})$; that is, the BMP outputs the most probable 
class in $P(Y)$ for every $X$. 

\begin{lemma}[BMP for 0-1 loss]
	\label{0-1lemma}
	For (multi-class) 0-1 loss classification, the Bayesian marginal prediction is 
	$f_0\eqdef\argmax_i((\Pr\{Y=i\}))$.
\end{lemma}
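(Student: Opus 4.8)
The plan is to follow the same two-step template used in the proof of Lemma~\ref{lemma:crossentropy}: first write down the risk of an arbitrary member of $\cB$ in closed form, and then verify that the proposed $f_0$ attains the minimum. The only point needing attention is that the $0$-$1$ loss is \emph{not} convex in its second argument, so Lemma~\ref{lemma:constant} does not apply and the reduction to constant predictors must be justified separately --- either directly (as below), or by appealing to Yao's principle exactly as mentioned just before Lemma~\ref{lemma:constant}.

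First I would compute the risk of a constant predictor $f_0 \equiv c$ for $c \in C$. Using the identity $R_\ell(g) = \E_Y \ell(Y,g)$ for $g \in \cB$, established right after the definition of $\cB$,
$$
R_\ell(c) = \E_Y\, I(Y \neq c) = \Pr\{Y \neq c\} = 1 - \Pr\{Y = c\}.
$$
Minimizing the right-hand side over $c \in C$ is the same as maximizing $\Pr\{Y=c\}$, which yields $c = \argmax_i \Pr\{Y=i\}$, i.e.\ the claimed $f_0$, with optimal constant risk $1 - \max_i \Pr\{Y=i\}$.

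Next I would rule out improvement by a randomized marginal predictor. Let $f_Q \in \cB$ output class $j$ with probability $Q_j$, independently of $X$ and $Y$, where $Q = (Q_1,\dots,Q_k)$ is a distribution on $C$. Then
\begin{eqnarray*}
R_\ell(f_Q) &=& \E_Y \E_Q\, I(Y \neq f_Q) \\
&=& \sum_{i \in C} \Pr\{Y=i\} \sum_{j \in C} Q_j\, I(i \neq j) \\
&=& \sum_{i \in C} \Pr\{Y=i\}\,(1 - Q_i) \\
&=& 1 - \sum_{i \in C} Q_i \Pr\{Y=i\}.
\end{eqnarray*}
Since $\sum_{i \in C} Q_i \Pr\{Y=i\}$ is a convex combination of the numbers $\{\Pr\{Y=i\}\}_{i \in C}$, it is bounded above by $\max_i \Pr\{Y=i\}$; hence $R_\ell(f_Q) \geq 1 - \max_i \Pr\{Y=i\} = R_\ell(f_0)$, so $f_0 = \argmax_i \Pr\{Y=i\}$ is optimal over all of $\cB$, which is the claim.

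The ``obstacle'' here is mild: it is just the observation that the non-convexity of the $0$-$1$ loss forces the randomized case to be treated by the elementary convex-combination bound above rather than by Jensen's inequality (as was possible in Lemma~\ref{lemma:constant}). Once that is in place, both steps are one-line computations, and the explicit PA for the $0$-$1$ loss follows immediately as $\textnormal{PA}_\ell(f) = 1 - R_\ell(f)/(1 - \max_i \Pr\{Y=i\})$.
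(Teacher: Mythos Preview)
Your proof is correct and follows essentially the same route as the paper: compute $R_\ell(f_0)=1-\max_i\Pr\{Y=i\}$, compute $R_\ell(f_Q)=1-\sum_i Q_i\Pr\{Y=i\}$ for an arbitrary randomized marginal predictor, and bound the latter via the convex-combination inequality $\sum_i Q_i\Pr\{Y=i\}\le\max_i\Pr\{Y=i\}$. Your explicit remark that Lemma~\ref{lemma:constant} is unavailable here (because the $0$-$1$ loss is not convex) is a worthwhile clarification that the paper leaves implicit.
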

\begin{proof}
	Consider an arbitrary marginal classifier $f_Q(X)$ defined with respect to the 
	probability mass function $Q$ whose support set is the set of classes 
	$C$; that is,  $f_Q(X) \sim Q$. We prove that $R_\ell(f_0) \leq R_\ell(f_Q)$. We denote the most probable class in P(Y) as $j$. 
	Applying the 0-1 loss on  $f_0$, we have
	
	$$
	R_{\ell_{0-1}}(f_0)=1-\max_{i\in C}(\Pr\{Y=i\})=1-\Pr\{Y=j\}.
	$$
	The risk of $f_Q$ is
	$$
	R_{\ell_{0-1}}(f_Q)=\E_{Y}\ell(Y,Y_Q)=1-\sum_{i\in C}\Pr\{Y=i\}\Pr_Q\{Y_Q=i\}.		
	$$
	Therefore,
	$$
	R_{\ell_{0-1}}(f_Q) \geq 1-\Pr\{Y=j\} \sum_{i\in C}\Pr_Q\{Y_Q=i\} = 1-\Pr\{Y=j\} = R_{\ell_{0-1}}(f_0) .
	$$

	%
	%
\end{proof}	

We conclude that the Prediction Advantage for multi-class classification under the 0-1 loss 
function (which includes binary classification) is
$$
\textnormal{PA}_\ell (f)\eqdef1-\frac{R_\ell(f)}{R_\ell(f_0)}=1-\frac{R_\ell(f)}{1-\max_{i\in C}(\Pr\{Y=i\})}.
$$

\subsection{Prediction Advantage for Squared Loss in Regression}
We now discuss a (multiple) regression setting under the squared loss.
For simplicity, we consider the univariate outcome model where $\cY = \reals$, but the results can be extended to multivariate outcome models $\cY = \reals^m$ in a straightforward manner.
Thus, the predicted variable is a real number,  $Y\in \reals$, and
the squared loss function is defined as $\ell(r,s)=(r-s)^2$.
We now show that the BMP strategy for this setting is $f_0\eqdef\E(Y)$.

\begin{lemma}[BMP for squared loss in regression]
	$f_0\eqdef\E[Y]$ is the BMP for  regression under the squared loss. 
\end{lemma}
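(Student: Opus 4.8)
The plan is to reduce the problem to minimizing a one-dimensional quadratic. Since the squared loss $\ell(r,s)=(r-s)^2$ is convex in its second argument, Lemma~\ref{lemma:constant} tells us that some constant prediction attains the infimum of $R_\ell$ over $\cB$; so it suffices to compare $f_0=\E[Y]$ against an arbitrary constant $c\in\reals$. Invoking the relation $R_\ell(g)=\E_Y\,\ell(Y,g)$ established earlier for any $g\in\cB$, we have $R_\ell(c)=\E_Y[(Y-c)^2]$, and the task becomes showing $\E_Y[(Y-c)^2]$ is minimized at $c=\E[Y]$.

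The second step is the familiar bias–variance decomposition: write $Y-c=(Y-\E[Y])+(\E[Y]-c)$ and expand the square. The cross term vanishes because $\E[Y-\E[Y]]=0$, leaving
$$
R_\ell(c)=\E_Y[(Y-\E[Y])^2]+(\E[Y]-c)^2=\var(Y)+(\E[Y]-c)^2 .
$$
The first term does not depend on $c$ and the second is nonnegative, vanishing precisely when $c=\E[Y]$. Hence $c=\E[Y]$ minimizes $R_\ell(c)$ over all constants, and by the reduction above $f_0=\E[Y]$ is the BMP, with BMP risk $R_\ell(f_0)=\var(Y)$.

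I do not expect any real obstacle. The only point needing care is the passage from arbitrary (possibly randomized) marginal predictors in $\cB$ to constants, and this is supplied verbatim by Lemma~\ref{lemma:constant} applied to the convex map $s\mapsto(r-s)^2$; alternatively one can reprove it directly via Jensen's inequality in this special case. As a consequence, the Prediction Advantage for squared-loss regression is $\textnormal{PA}_\ell(f)=1-R_\ell(f)/\var(Y)$, which is exactly the population version of the classical $R^2$ statistic, as anticipated in the introduction.
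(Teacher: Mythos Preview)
Your proof is correct. The paper itself does not actually prove this lemma: it simply states that the result is the well-known minimum mean squared error fact from signal processing and refers the reader to a textbook. Your argument---reducing to constant predictors via Lemma~\ref{lemma:constant} and then applying the bias--variance decomposition $\E_Y[(Y-c)^2]=\var(Y)+(\E[Y]-c)^2$---is the standard elementary derivation of that fact and is more self-contained than what the paper provides. The identification $R_\ell(f_0)=\var(Y)$ and the resulting $\textnormal{PA}_\ell(f)=1-R_\ell(f)/\var(Y)$ match the paper's subsequent discussion exactly.
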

The proof for this lemma is the known result of \emph{minimum mean squared error} in the field of signal processing, for further reading see  \cite{oppenheim2015signals} (chapter 8).

%
Having identified the BMP and observing that 
$$
R_\ell(f_0) = \E_Y [(Y - f_0)^2 ] = \E_Y [(Y - E[Y])^2 ] = \var(Y),
$$
we obtain the following expression for the PA:
$$
\textnormal{PA}_\ell (f)\eqdef1-\frac{R_\ell(f)}{R_\ell(f_0)}=1-\frac{R_\ell(f)}{\var(Y)}.
$$
Apparently, the PA in regression is precisely the well-known $R$-squared measure in regression analysis
(also known as the coefficient of determination) \cite{helland1987interpretation}.

\subsection{Prediction Advantage for Absolute Loss in Regression}
Consider a univariate (multiple) regression setting under the absolute loss.
The predicted variable is a real number,  $Y\in \reals$, and
the absolute loss function is defined as $\ell(r,s)=|r-s|$.
We now show that the BMP strategy for this setting is $f_0\eqdef\ median(Y)$.

\begin{lemma}[BMP for absolute loss in regression]
	$f_0 \eqdef median(Y)$ is the BMP for  regression under the squared loss. 
\end{lemma}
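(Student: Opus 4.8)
The plan is to first apply Lemma~\ref{lemma:constant}: since $\ell(r,s)=|r-s|$ is convex in its second argument, the minimum of $R_\ell(g)$ over $g\in\cB$ is attained by a constant prediction, so the task reduces to showing that the function $c\mapsto \E_Y|Y-c|$ over $c\in\reals$ is minimized at $c=\mathrm{median}(Y)$. (If $\E|Y|=\infty$ the risk of every constant is infinite and the claim is vacuous, so I would assume $Y$ integrable.) Fix any median $m$ of $Y$, i.e. any $m$ with $\Pr\{Y\le m\}\ge 1/2$ and $\Pr\{Y\ge m\}\ge 1/2$; such $m$ exists and, when not unique, is chosen arbitrarily, consistent with the footnote on non-uniqueness of the BMP. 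It then suffices to show $\E_Y|Y-c|-\E_Y|Y-m|\ge 0$ for every constant $c$, which I would do by a pointwise comparison, splitting into the cases $c>m$ and $c<m$ (the case $c=m$ being trivial).

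Consider $c>m$. Computing $|y-c|-|y-m|$ separately on the three regions $\{y\le m\}$, $\{m<y<c\}$, $\{y\ge c\}$ gives the constant value $c-m$ on the first region, the value $-(c-m)$ on $\{y\ge c\}$, and the value $c+m-2y$ on the middle strip, which lies strictly between $-(c-m)$ and $c-m$. Hence the pointwise bound
$$
|y-c|-|y-m| \;\ge\; (c-m)\,\I(y\le m)\;-\;(c-m)\,\I(y>m)
$$
holds for all $y$. Taking expectations,
$$
\E_Y|Y-c|-\E_Y|Y-m| \;\ge\; (c-m)\bigl(\Pr\{Y\le m\}-\Pr\{Y>m\}\bigr)\;\ge\;0,
$$
where the last step uses $c-m>0$ together with $\Pr\{Y\le m\}\ge 1/2\ge \Pr\{Y>m\}$. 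The case $c<m$ is handled symmetrically, now invoking the other defining inequality $\Pr\{Y\ge m\}\ge 1/2$. Combining this with the reduction to constants shows that $f_0\eqdef\mathrm{median}(Y)$ achieves $\min_{g\in\cB}R_\ell(g)$, as claimed.

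The bulk of this is routine case arithmetic; the only places that need care are (i) the middle strip $m<y<c$, where one must verify the bound $c+m-2y\ge -(c-m)$ rather than hope the term is nonnegative, and (ii) the bookkeeping of strict versus weak inequalities at $y=m$ and $y=c$ so that the indicator split matches the two median conditions exactly. An equivalent alternative I would mention is the layer-cake identity $\E_Y|Y-c|=\int_{-\infty}^{c}\Pr\{Y\le t\}\,dt+\int_{c}^{\infty}\Pr\{Y>t\}\,dt$, whose derivative in $c$ is $\Pr\{Y\le c\}-\Pr\{Y>c\}$, nonpositive for $c<m$ and nonnegative for $c>m$, again identifying the median as the minimizer; I would still prefer the elementary pointwise argument since it avoids justifying the identity and the differentiation step, which is the only mild technical obstacle here.
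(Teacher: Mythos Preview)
Your argument is correct. Both you and the paper start with Lemma~\ref{lemma:constant} to reduce to constants, but from there the routes diverge. The paper simply writes $R_\ell(f_a)=\E_Y|Y-a|$ as an integral, differentiates in $a$ under the integral sign to obtain $\int_{-\infty}^{a}P(Y)\,dP(Y)-\int_{a}^{\infty}P(Y)\,dP(Y)=0$, and reads off $a=\mathrm{median}(Y)$ from this first-order condition. Your approach instead fixes a median $m$ and shows directly, by a pointwise lower bound on $|y-c|-|y-m|$, that $\E_Y|Y-c|\ge\E_Y|Y-m|$ for every constant $c$.

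What you gain is robustness: your proof works for any law of $Y$ (including discrete or mixed distributions where the CDF has jumps and the paper's derivative computation would need justification or even fail), it handles the non-unique-median case explicitly, and it certifies a global minimum rather than merely a stationary point. What the paper's calculus route buys is brevity and familiarity, at the cost of leaving those regularity issues (differentiation under the integral, second-order check, atoms at the median) implicit. Your aside about the layer-cake identity is essentially a cleaned-up version of the paper's derivative argument; you are right that the elementary pointwise comparison avoids those technicalities entirely.
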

\begin{proof}
	According to Lemma~\ref{lemma:constant} the BMP in regression is a constant function, if we consider an arbitrary constant function $f_a=a \in \reals$, the risk of $f_a$ is 
	
	\begin{eqnarray*}
		R_{\ell}(f_a)&=&\E_Y|Y-a|\\
		&=&\int_{Y}|Y-a|P(Y)dP(Y)\\
		&=&\int_{-\infty}^{a}-(Y-a)P(Y)dP(Y)+\int_{a}^{\infty}(Y-a)P(Y)dP(Y).\\
	\end{eqnarray*}
	Taking the derivative with respect to $a$ to find the risk minimizer, we have
	$$
	\frac{\partial R_{\ell}(f_a)}{\partial a}=\int_{-\infty}^{a}P(Y)dP(Y)-\int_{a}^{\infty}P(Y)dP(Y)=0 .
	$$
	Thus,  the BMP is $f_0=median(Y)$.
\end{proof}
Having identified the BMP, we find that the risk is the \emph{mean absolute deviation} (MAD) around the median,
$$
R_\ell(f_0) = \E_Y [|Y - median(Y)|]=D_{med}.
$$
We obtain the following expression for the PA:
$$
\textnormal{PA}_\ell (f)\eqdef1-\frac{R_\ell(f)}{R_\ell(f_0)}=1-\frac{R_\ell(f)}{D_{med}}.
$$

\subsection{Prediction Advantage for Cost-Sensitive Loss}
Consider a multi-class classification problem where the classes are 
$C \eqdef \{ 1, 2, \ldots, k\}$),  and  the loss function is defined with a specific cost for each misclassification type (see Elkan, (2001), for further details). 
For $1 \leq i, j \leq k$, $b_{i,j}$ 
is the cost for predicting label $i$ while the true label is $j$.
Clearly, the BMP in this setting is 
$$
f_0\eqdef \argmin_{i\in C}\left({\sum_{j\in C}{b_{i,j}\cdot{\Pr(Y=j)}}}\right).
$$
We skip the proof of optimality of this proposed BMP,  which is similar to the proof of Lemma~\ref{0-1lemma}.
%
The risk of the BMP is
$$
R_{\ell_c}(f_0)= \min_{i\in C}\left({\sum_{j\in C}{b_{i,j}\cdot{\Pr(Y=j)}}}\right),
$$

and therefore, the prediction advantage for the cost-sensitive loss is
$$
\textnormal{PA}_{\ell_c (f)}\eqdef 1-\frac{R_{\ell_c(f)}}{R_{\ell_c(f_0)}}
=1-\frac{R_{\ell_c(f)}}{\min_{i\in C}\left({\sum_{j\in C}{b_{i,j}\cdot{\Pr(Y=j)}}}\right)}.
$$

\section{Related Measures}
\label{sec:related work and other measures}

The false positive and false negative rates (called also type 1
and type 2 errors, respectively) are typically used in statistics in
the context of hypothesis testing. One
can meaningfully compare two classifiers by considering (separately)
their false positive and false negative rates. Moreover, Brodersen et al. \cite{brodersenbalanced} proposed to compare the mean of the true positive and true negative rates and defined it as the \emph{balanced accuracy}. 

Emerging from information retrieval, \emph{precision}
and \emph{recall} are two popular measures that can be used to meaningfully measure performance 
in imbalanced problems \cite{raghavan1989PR}. 
In the context of a binary classification problem with a target minority class, 
precision is the fraction of instances classified to be in the target that truly are in the target.
Recall is the fraction of true target instances that are classified correctly.
As stated in  Manning et al. \cite{manning2008introduction}, the use of these measures prevents meaningless assessments in imbalance problems. Precision and recall are scaled similarly and thus can be combined to a single quantity. It is common in text categorization to take their harmonic mean as a single performance
measure to be optimized. This quantity is called the
\emph{$F$-measure} and it approximates the so-called \emph{break-even point}, defined to be the equilibrium 
of the trade-off between precision and recall,


Another performance measure, which emerged  from experimental psychology, is \emph{Cohen's kappa}  \cite{cohen1960coefficient}. Originally, Cohen's kappa  was introduced as  a statistical measure to quantify 
the agreement between two raters, each of whom   classifies 
$N$ items into $C$ mutually exclusive categories, while taking into account
the probability that the raters agree by chance alone. 
Formally, it was defined as $\kappa \eqdef 1-\frac{1-p_0}{1-p_e}$, 
where $p_0$ is the fraction of items the raters agreed upon, and $p_e$ is the 
probability of chance agreement on a new item given that each rater only knows her own class distribution. Thus, $\kappa$ is a normalized measure, where $\kappa= 1$ represents complete agreement, and
$\kappa = 0$, complete disagreement.
Cohen's kappa has been advocated by  Fatourechi et al. \cite{fatourechi2008comparison} to tackle imbalanced classification problems.
To apply it in classification, we take one of the raters to be nature, which assigns true labels, and the other
rater is the classifier. Then, $p_0$ is the 0/1 accuracy of the classifier (over a sample of $N$ instances) and
$p_e$ is the (label) marginal of the classifier multiplied by the (label) marginal of nature. 
While the use of $\kappa$ to quantify agreement between raters is well motivated, its use in classification, 
where the classifier and nature do not play symmetric roles, is problematic.
Given a labeled training sample, the learner has some knowledge of
nature's marginal, which can be utilized to obtain higher chances to hit the true label. In contrast,
in the case of two symmetric raters, no one knows anything about the other's marginal. 
The prediction advantage measure make up this deficiency, and unlike Cohen's kappa, it also
applies to any loss function.


%
\emph{$R$-squared} 
is probably the most popular measure of fit in statistical modeling and particularly in the context of 
regression analysis \cite{helland1987interpretation}.  There have been numerous attempts to extend the $R$-squared measure to logistic regression.
For example,  Efron \cite{efron1978regression} extended the $R$-squared measure by calculating the squared loss over the predicted probabilities divided by the variance measured in the probability space and McFadden \cite{mcfadden1974conditional} extended the $R$-squared measure by replacing the loss with the log-likelihood of the model, and the variance with the log-likelihood of the intercept.
In fact, the large number of proposed performance measures
in the context of logistic regression is somewhat confusing and indicates a lack of wide consensus on how to extend the $R$-squared measure to this setting. We emphasize that all these proposals only relate to (logistic) regression
and generally are not defined for other loss functions.

\begin{figure}[htb]
	\centering
	\fbox{\rule[0cm]{0cm}{0cm}
	\subfigure[]{\includegraphics[width=0.3\linewidth]{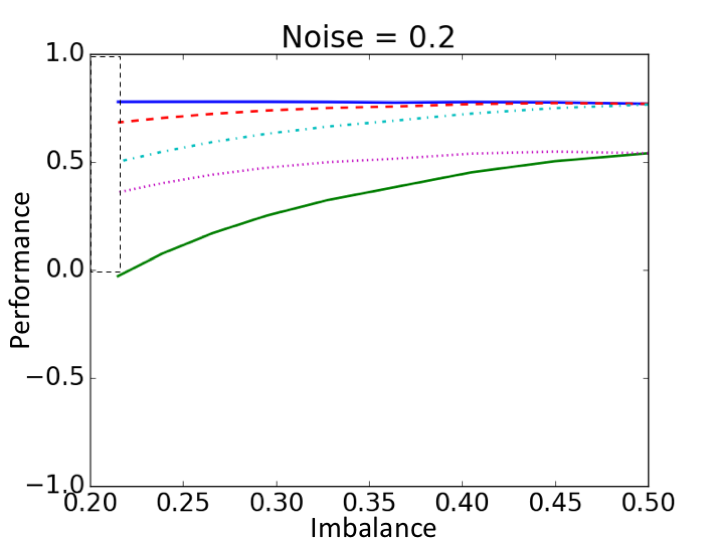} }
	\subfigure[]{\includegraphics[width=0.3\linewidth]{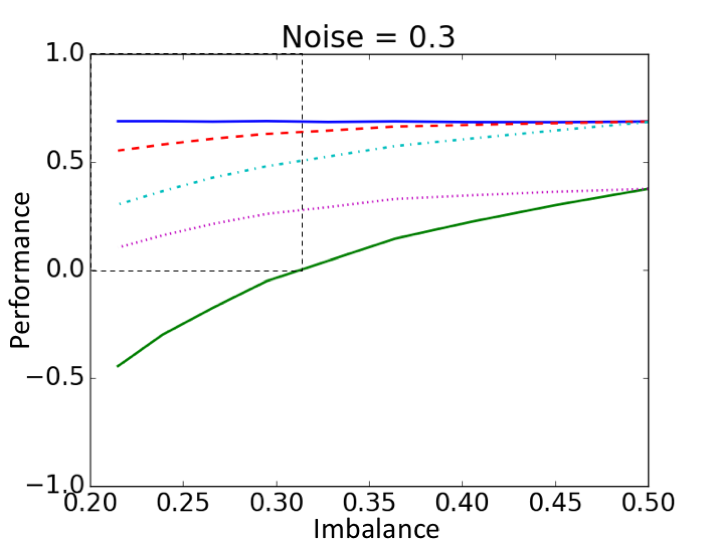} }
	\subfigure[]{\includegraphics[width=0.3\linewidth]{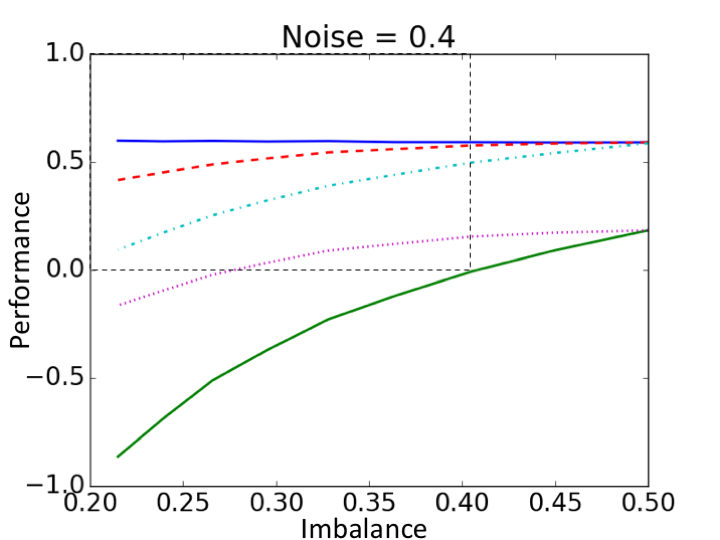} }\rule[0cm]{0cm}{0cm}}
	
	\caption{Comparison of performance measures for different levels of imbalance and noise on the breast cancer dataset. Blue -- accuracy, red -- balanced accuracy, turquoise -- F1 score, purple -- Cohen's Kappa, green -- Prediction Advantage.}
	\label{fig:binary-exp}
\end{figure}

\section{Analysis }
\label{sec:analysis}

When the PA of a prediction function is not positive, the BMP outperforms our function. In this case our function is possibly no better than trivial.
Thus, using the PA allows us to detect such trivial cases.
Most of the alternative methods mentioned in Section~\ref{sec:related work and other measures} are defined in terms
of the 0-1 loss function in a binary classification setting. 
For this setting we can show that the PA lower bounds all other measures.
Consequently, when the PA is zero, 
all the alternative measures will be positive and wrongly qualify meaningless functions.

The formal introduction of all the measures we discuss above can be made simple using 
the Venn diagram in Figure~\ref{fig:diagram}. Let $f$ be a classifier whose performance we would like 
to quantify. The areas in this diagram are defined as follows.

\begin{eqnarray*}
	a &\eqdef& \Pr_{X,Y} \{ f(X) = 1, Y = -1\},\\
	b &\eqdef& \Pr_{X,Y} \{ f(X) = 1, Y = 1\},\\
	c &\eqdef& \Pr_{X,Y} \{ f(X) = -1, Y = 1\},\\
	d &\eqdef& \Pr_{X,Y} \{ f(X) = -1, Y = -1\}.
\end{eqnarray*}

We can express the PA
through the 0-1 errors of the BMP ($f_0$) and our function ($f$)
as follows.

\begin{eqnarray*}
	\ell(f_0(X),Y) &=& b + c\\
	\ell(f(X),Y) &=& a+ c\\
	\textnormal{PA}_\ell(f(X),Y) &=& 1 - \frac{\ell(f(X),Y)}{\ell(f_0(X),Y)} = 1 - \frac{a+c}{b+c} = \frac{b-a}{b+c}.
\end{eqnarray*}

\begin{figure}[htb]
	\centering
	\includegraphics[width=0.7\linewidth]{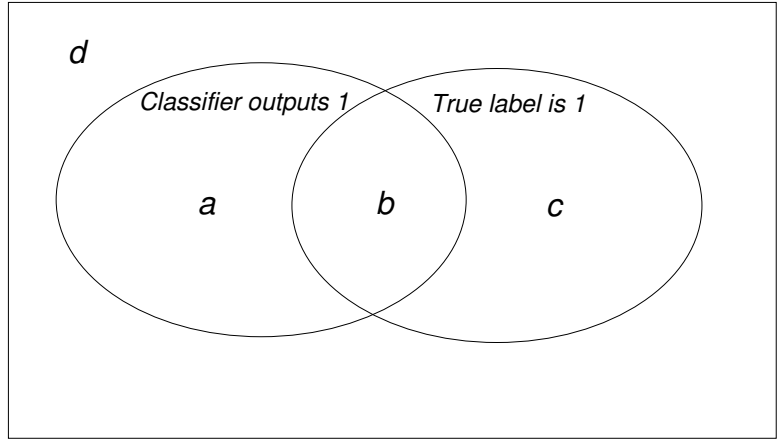}
	\caption{ Venn diagram: The areas $a,b,c$ and $d$ form a partition of the rectangle
		(i.e., they are mutually exclusive and exhaustive). We assume that $a+b+c+d = 1$.
		We also assume w.l.o.g. that the label '1' represents the minority class,
		and thus  $b+c \leq a + d$.}
	\label{fig:diagram}
\end{figure}

The true positive rate $TP(f(X),Y)$ and the true negative rate $TN(f(X),Y)$ are:
\begin{eqnarray*}
	TP(f(X),Y) &\eqdef& \Pr_{X,Y}\{ f(X)=1 | Y = 1\} = \frac{b}{b+c}\\
	TN(f(X),Y) &\eqdef& \Pr_{X,Y}\{ f(X)=-1 | Y = -1\} = \frac{d}{a+d}.
\end{eqnarray*}
Recalling that the balanced accuracy (see Section~\ref{sec:related work and other measures}) is defined 
as the arithmetic mean of $TP$ and $TN$, we can state the following bounds.

\begin{lemma}
	\label{lemma:lemma TP-TN}
	For any classifier $f$,
	\begin{eqnarray}
	\textnormal{PA}_\ell(f(X),Y) &\leq&  TP(f(X),Y) \label{eq:TP}\\
	\textnormal{PA}_\ell(f(X),Y)&\leq& TN(f(X),Y)\label{eq:TN}\\
	\textnormal{PA}_\ell(f(X),Y)&\leq& \frac{TN(f(X),Y)+TP(f(X),Y)}{2} \eqdef BA(f(X),Y)\label{eq:BA} ,
	\end{eqnarray} 
	with strict inequality in~(\ref{eq:BA}) when $ TP(f(X),Y)<1$ or $TN(f(X),Y)<1$ (i.e., the 0/1 error 
	of $f$ is not zero).

\end{lemma}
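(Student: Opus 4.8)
The plan is to reduce all three inequalities to elementary estimates among the nonnegative numbers $a,b,c,d$, using only $a+b+c+d=1$ and the imbalance hypothesis $b+c\le a+d$. Note that these already force $b+c\le\tfrac12\le a+d$, so in particular $TN(f(X),Y)$ and $BA(f(X),Y)$ are well defined, and I assume $b+c>0$ (otherwise the BMP already has zero risk and $\textnormal{PA}_\ell(f)$ is not defined). Recall from the display preceding the lemma that $\textnormal{PA}_\ell(f(X),Y)=\frac{b-a}{b+c}$, while $TP(f(X),Y)=\frac{b}{b+c}$ and $TN(f(X),Y)=\frac{d}{a+d}$.

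Inequality~(\ref{eq:TP}) is then immediate: since $b+c>0$ and $a\ge0$, we have $\frac{b-a}{b+c}\le\frac{b}{b+c}$, with equality iff $a=0$. For~(\ref{eq:TN}) I would clear the (positive) denominators and reduce $\frac{b-a}{b+c}\le\frac{d}{a+d}$ to $(b-a)(a+d)\le d(b+c)$, which, after expanding and cancelling, is equivalent to $a(b-a-d)\le cd$. Here the imbalance hypothesis enters through $b-a-d\le -c$, which gives $a(b-a-d)\le -ac\le 0\le cd$; this is the only place the hypothesis is used, and it is really the crux of the lemma. Tracking these inequalities shows that equality in~(\ref{eq:TN}) forces $a=0$.

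Finally,~(\ref{eq:BA}) follows by averaging~(\ref{eq:TP}) and~(\ref{eq:TN}). For the strictness claim: since $\textnormal{PA}_\ell(f)\le TP$ and $\textnormal{PA}_\ell(f)\le TN$, equality in~(\ref{eq:BA}) can hold only when $\textnormal{PA}_\ell(f)=TP$ and $\textnormal{PA}_\ell(f)=TN$ simultaneously. The first equality forces $a=0$, whence $TN=\frac{d}{a+d}=\frac{d}{d}=1$ (note $d>0$ because $a+d\ge\tfrac12$), and then $\textnormal{PA}_\ell(f)=TN=1$ forces $\frac{b}{b+c}=1$, i.e.\ $c=0$. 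Thus equality in~(\ref{eq:BA}) implies $a=c=0$, i.e.\ $TP=TN=1$ (zero $0/1$ error of $f$); contrapositively, whenever $TP<1$ or $TN<1$, inequality~(\ref{eq:BA}) is strict. I do not anticipate any serious obstacle beyond keeping the degenerate cases in check and observing that the imbalance hypothesis $b+c\le a+d$ is exactly what makes~(\ref{eq:TN}) go through.
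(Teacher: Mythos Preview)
Your argument is correct and, for (\ref{eq:TP}) and (\ref{eq:BA}), essentially identical to the paper's. For (\ref{eq:TN}) you take a slightly different algebraic route: you clear denominators and reduce to $a(b-a-d)\le cd$, then invoke $b+c\le a+d$ to get $a(b-a-d)\le -ac\le 0\le cd$. The paper instead writes the chain
\[
\textnormal{PA}_\ell(f(X),Y)=1-\frac{a+c}{b+c}\le 1-\frac{a}{b+c}\le 1-\frac{a}{a+d}=TN(f(X),Y),
\]
which separates the two estimates (dropping $c$ from the numerator, then using $b+c\le a+d$ in the denominator) and makes the equality analysis transparent. Both routes are equivalent in content; the paper's is a bit cleaner for tracking when strictness occurs.

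One small slip: your claim that ``equality in~(\ref{eq:TN}) forces $a=0$'' is not quite right. In the balanced boundary case $b+c=a+d$ with $c=0$ one can have equality in~(\ref{eq:TN}) with $a>0$ (e.g.\ $a=0.1$, $b=0.5$, $c=0$, $d=0.4$ gives $\textnormal{PA}=TN=0.8$). What is true --- and what the paper records --- is that $c>0$ forces strict inequality in~(\ref{eq:TN}). Fortunately you do not use the faulty claim: your strictness argument for~(\ref{eq:BA}) proceeds independently via $\textnormal{PA}=TP\Rightarrow a=0\Rightarrow TN=1\Rightarrow c=0$, which is correct and matches the paper's conclusion.
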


\begin{proof}
	Clearly,
	$$
	\textnormal{PA}_\ell(f(X),Y)=\frac{b-a}{b+c}\leq \frac{b}{b+c}=TP(f(X),Y),
	$$
	which proves~(\ref{eq:TP}), and clearly, 
	when $a>0$,~(\ref{eq:TP}) holds with strict inequality.
	
	To prove~(\ref{eq:TN}), we first note that since the label '1' represents the minority class,
	$b+c \leq a + d$. Therefore,
	$$
	\textnormal{PA}_\ell(f(X),Y)=\frac{b-a}{b+c}=1-\frac{a+c}{b+c}\leq 1-\frac{a}{b+c}\leq 1-\frac{a}{a+d}=\frac{d}{a+d}=TN(f(X),Y),
	$$
	and whenever $c>0$, (\ref{eq:TN}) holds with strict inequality. 
	
	Obviously,  (\ref{eq:TP}) and~(\ref{eq:TN}) together imply  ~(\ref{eq:BA}), and 
	whenever~(\ref{eq:TP}) or~(\ref{eq:TN}) hold with strict inequality, (\ref{eq:BA}) also holds with strict inequality.
\end{proof}

Turning now to precision and recall and the related F-measure (and break-even point),
the \emph{precision} $PRE(f(X),Y)$ and \emph{recall} $REC(f(X),Y)$ can be expressed as
$$
PRE(f(X),Y) \eqdef \frac{b}{a+b}    \ \ \ \ \
REC(f(X),Y) \eqdef \frac{b}{b+c}.
$$
The F-measure is
$$
F(f(X),Y) \eqdef \frac{2}{1/PRE(f(X),Y) + 1/REC(f(X),Y)} = \frac{2 PRE(f(X),Y) REC(f(X),Y)}{PRE(f(X),Y) + REC(f(X),Y)}
.$$
Clearly, a trivial classifier $f$ that classifies everything as `0' will have a zero
F-measure.

The following lemma shows the strict domination of the PA relative to precision and recall and the derived F-measure and BEP.

\begin{lemma}
	\label{lemma:lemma pre-rec}
	For any classifier $f$,
	\begin{eqnarray}
	\textnormal{PA}_\ell(f(X),Y) &\leq& PRE(f(X),Y)\label{eq:PRE}\\
	\textnormal{PA}_\ell(f(X),Y) &\leq& REC(f(X),Y)\label{eq:REC}\\
	\textnormal{PA}_\ell(f(X),Y) &\leq& F(f(X),Y),
	\label{eq:Fmeasure}
	\end{eqnarray}
	and when $PRE(f(X),Y)<1$, (\ref{eq:Fmeasure}) holds with strict inequality.
	
\end{lemma}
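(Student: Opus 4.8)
The plan is to work entirely with the Venn-diagram quantities $a,b,c,d$ introduced above, using the identities $\textnormal{PA}_\ell(f(X),Y)=\frac{b-a}{b+c}$, $PRE(f(X),Y)=\frac{b}{a+b}$, and $REC(f(X),Y)=\frac{b}{b+c}$, and assuming, as is implicit whenever these ratios are written, that the minority class has positive probability ($b+c>0$) and that $f$ predicts the target with positive probability ($a+b>0$). First I would dispatch~(\ref{eq:REC}): since $a\geq 0$ we have $b-a\leq b$, and dividing by $b+c>0$ gives $\textnormal{PA}_\ell(f(X),Y)\leq REC(f(X),Y)$, strictly when $a>0$. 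This is in fact the same computation as the $TP$ bound in Lemma~\ref{lemma:lemma TP-TN}, because $REC=TP$.

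Next, for~(\ref{eq:PRE}) I would split on the sign of $b-a$. If $b\leq a$ then $\textnormal{PA}_\ell(f(X),Y)\leq 0\leq PRE(f(X),Y)$ and there is nothing to prove. If $b>a$, both sides are nonnegative and the denominators positive, so cross-multiplying is legitimate: $\frac{b-a}{b+c}\leq\frac{b}{a+b}$ is equivalent to $(b-a)(a+b)\leq b(b+c)$, i.e.\ $b^2-a^2\leq b^2+bc$, i.e.\ $0\leq a^2+bc$, which always holds and is strict whenever $a^2+bc>0$, in particular whenever $a>0$. Hence $\textnormal{PA}_\ell(f(X),Y)\leq PRE(f(X),Y)$ in all cases, strictly when $a>0$.

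Finally, I would obtain~(\ref{eq:Fmeasure}) as a free consequence of the two bounds just proved together with the elementary fact that the harmonic mean of two nonnegative reals is at least their minimum: if $x=\min\{x,y\}$ then $\frac{2xy}{x+y}\geq x$ since $2y\geq x+y$. Applying this with $x=PRE(f(X),Y)$, $y=REC(f(X),Y)$ (the degenerate case $b=0$ is trivial, since then $F=0$ while $\textnormal{PA}_\ell(f(X),Y)\leq 0$), we get $F(f(X),Y)\geq\min\{PRE(f(X),Y),REC(f(X),Y)\}\geq\textnormal{PA}_\ell(f(X),Y)$. For the strictness claim, note that $PRE(f(X),Y)<1$ is equivalent to $a>0$; in that case both~(\ref{eq:REC}) and~(\ref{eq:PRE}) are strict, so $\textnormal{PA}_\ell(f(X),Y)<\min\{PRE(f(X),Y),REC(f(X),Y)\}\leq F(f(X),Y)$.

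I do not expect a genuine obstacle here. The only points that need care are the degenerate cases (a vanishing denominator making a ratio undefined, or $b=0$ making the $F$-measure vanish) and pinning down exactly when the cross-multiplication in the $PRE$ step is strict; everything else is a one-line manipulation. The cleanest organization is to prove the $PRE$ and $REC$ bounds first and then read off the $F$-measure bound from ``harmonic mean $\geq$ minimum,'' rather than attempting a direct algebraic comparison of $\frac{b-a}{b+c}$ with $F(f(X),Y)$.
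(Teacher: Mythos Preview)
Your proposal is correct and follows essentially the same route as the paper: the paper also reduces the $PRE$ bound to the algebraic inequality $b^2-a^2\leq b^2+bc$ (equivalently $0\leq a^2+bc$), proves the $REC$ bound by the one-line observation $\frac{b-a}{b+c}\leq\frac{b}{b+c}$, and then reads off the $F$-measure bound and its strictness (via $PRE<1\iff a>0$) from the two preceding inequalities. The only cosmetic differences are that the paper does not case-split on the sign of $b-a$ in the $PRE$ step (it divides the product inequality directly by $(b+c)(a+b)$) and leaves the ``harmonic mean $\geq$ minimum'' step implicit.
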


\begin{proof}
	Note that
	it is always true that $-a^2 \leq 0 \leq bc$. Adding $b^2$ to both sides, we get
	$$
	(b+a)(b-a) = b^2-a^2 \leq bc+b^2 = b(b+c).
	$$
	Dividing both sides by $(b+c)(a+b)$, we get
	$$
	\textnormal{PA}_\ell(f(X),Y) = \frac{b-a}{b+c} \leq \frac{b}{a+b} = PRE(f(X),Y).
	$$
	
	Proving that $\textnormal{PA}_\ell(f(X),Y) \leq REC(f(X),Y)$ is immediate:
	$$
	\textnormal{PA}_\ell(f(X),Y) = \frac{b-a}{b+c} \leq \frac{b}{b+c} = REC(f(X),Y).
	$$
	Whenever $a>0$,~(\ref{eq:PRE}) and~(\ref{eq:REC}) hold with strict inequality.
	
	Obviously,  if both (\ref{eq:PRE}) and~(\ref{eq:REC}) hold, (\ref{eq:Fmeasure}) also holds, and whenever $PRE(f(X),Y)<1$,  both (\ref{eq:PRE}) and (\ref{eq:REC}) hold with strict inequality, in which case (\ref{eq:Fmeasure}) also holds with strict inequality.
\end{proof}

Turning now to the Cohen's kappa (see Section~\ref{sec:related work and other measures}), we 
first express it in terms of the areas. Let $p_0=b+d$ and $p_e=(a+b)(b+c)+(a+d)(c+d)$. We get that the kappa is,

$$
\kappa(f(X),Y) \eqdef 1-\frac{1-p_0}{1-p_e}=1-\frac{a+c}{1-(a+b)(b+c)-(a+d)(c+d)}.
$$
The following lemma shows that the PA lower bounds Cohen's kappa.

\begin{lemma}
	\label{lemma:lemma kappa}
	For any classifier $f$,
	\begin{eqnarray*}
		\textnormal{PA}_\ell(f(X),Y) &\leq& \kappa(f(X),Y) \label{eq:kappa},
	\end{eqnarray*}
	with strict inequality whenever the problem is imbalanced.
	
\end{lemma}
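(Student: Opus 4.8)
The plan is to reduce the claimed inequality to an elementary algebraic fact about the areas $a,b,c,d$. The starting observation is that both measures have the common form $1-(\text{0/1 error of }f)/(\text{normalizer})$. Indeed, the 0/1 error of $f$ is $a+c$ (this is $\ell(f(X),Y)$ as computed above, and also $1-p_0$ with $p_0=b+d$), so $\textnormal{PA}_\ell(f)=1-\frac{a+c}{b+c}$ with normalizer $b+c=\ell(f_0(X),Y)$ (the BMP error), while $\kappa(f)=1-\frac{a+c}{1-p_e}$ with normalizer $1-p_e$. Assume the minority class has positive probability, $b+c>0$, so that $\textnormal{PA}_\ell$ is defined. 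If $a+c=0$ then $f$ is perfect and $\textnormal{PA}_\ell(f)=\kappa(f)=1$; otherwise $a+c>0$, and cancelling this positive factor shows that $\textnormal{PA}_\ell(f)\le\kappa(f)$ is equivalent to the single inequality
$$
b+c\;\le\;1-p_e .
$$

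The key step is to prove this inequality. I would expand $1-p_e=1-(a+b)(b+c)-(a+d)(c+d)$, use $a+b+c+d=1$ (hence $1-(c+d)=a+b$), and collect terms to get the identity
$$
(a+d)-p_e\;=\;(a+b)\bigl[(a+d)-(b+c)\bigr].
$$
Since $a+b\ge 0$ and, by the standing assumption that label $1$ is the minority class, $a+d\ge b+c$, the right-hand side is nonnegative; therefore $p_e\le a+d=1-(b+c)$, which is exactly $b+c\le 1-p_e$. This also shows $1-p_e\ge b+c>0$, so $\kappa(f)$ is well-defined and the reduction above is legitimate, completing the proof of $\textnormal{PA}_\ell(f)\le\kappa(f)$.

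For the strictness claim I would simply read off the equality cases of the identity: equality in $p_e\le a+d$ — and hence in $\textnormal{PA}_\ell(f)\le\kappa(f)$ — forces $a+b=0$ (i.e., $f$ never predicts the minority label) or $(a+d)-(b+c)=0$ (the problem is exactly balanced). So whenever the problem is strictly imbalanced and $f$ predicts the minority class with positive probability (and $f$ is not perfect, so $a+c>0$), the inequality is strict, which is the intended reading of ``whenever the problem is imbalanced.''

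I do not expect a real obstacle: the whole argument rests on the one-line factorization $(a+d)-p_e=(a+b)[(a+d)-(b+c)]$, which makes both the inequality and its equality conditions transparent. The only points requiring care are the degenerate cases where a normalizer vanishes ($b+c=0$ or $a+c=0$), which must be excluded or handled separately so the divisions are valid, and keeping track of the inequality direction when cancelling $a+c$ and when passing from $p_e\le a+d$ back to the comparison of the two reciprocals.
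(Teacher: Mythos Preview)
Your proof is correct and follows essentially the same route as the paper: both reduce $\textnormal{PA}_\ell(f)\le\kappa(f)$ to the single comparison $p_e\le a+d$ (equivalently $b+c\le 1-p_e$) and then invoke the minority-class assumption $b+c\le a+d$. The paper bounds $p_e=(a+b)(b+c)+(a+d)(c+d)\le(a+b)(a+d)+(a+d)(c+d)=a+d$ directly, while you obtain the exact factorization $(a+d)-p_e=(a+b)\bigl[(a+d)-(b+c)\bigr]$; this is the same computation, but your version has the small bonus of exposing the equality cases cleanly --- in particular you correctly note that equality can also occur when $a+b=0$ (the classifier never outputs the minority label) even in an imbalanced problem, a degenerate case the paper's strictness claim glosses over.
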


\begin{proof}
	Recalling that $b+c$ is the mass of the minority class, and thus $b+c \leq a+d$,
	$$
	p_e  = (a+b)(b+c)+(a+d)(c+d) \leq (a+b)(a+d)+(a+d)(c+d)=a+d,
	$$
	and when the problem is imbalanced, namely $b+c < a+d$,
	$p_e < a+d$.	Therefore,
	\begin{eqnarray*}
		\kappa(f(X),Y) &=& 1-\frac{a+c}{1-(a+b)(b+c)-(a+d)(c+d)}\\
		&\geq& 1-\frac{a+c}{1- (a+d)}\\
		&=&1-\frac{a+c}{b+c}\\
		&=&\textnormal{PA}_\ell(f(X),Y),
	\end{eqnarray*}
	and clearly, strict inequality will be obtained if the problem is imbalanced.
\end{proof}

%

\section{Numerical Examples}
\label{sec:experiments}

The main benefit in using the PA is its ability to detect meaningless performance of 
prediction functions relative to the Bayesian marginal prediction. 
In this section we first show a number of numerical examples that highlight the advantage of using  the PA in certain cases of imbalance and noise. 

Consider the following situation: You have purchased a classifier $f$ whose 0-1 accuracy is 
claimed by the factory to be 70\%. Is this classifier meaningful for your problem?
Unfortunately, it may very well be the case that you would be better off using the 
Bayesian marginal classifier (which you can easily train on your own based on
label proportions). However, if you are using the 0-1 loss, you would have no way of knowing this. Using the PA, however, would allow you to detect this situation. Some alternative measures, such as 
Cohen's kappa, can also help somewhat. 
In this section we empirically rank the various performance measures discussed above and 
show several values of imbalance and noise where only the PA will detect meaningless classifiers.

To this end, we consider the UCI breast cancer dataset \cite{Lichman:2013}, which is nearly balanced and 
nearly realizable.\footnote{For example, we trained
	a random forest classifier for this set whose test 0-1 error is $\hat{R}_\ell=0.038$.}
We use this set to synthetically generate a sequence of imbalanced and noisy independent 
test cases on which 
we compare all performance measures mentioned in Section~\ref{sec:related work and other measures}. We control imbalance by synthetically inflating class proportions to desired 
levels using bootstrap
sampling.
Label noise is also controlled to desired levels by flipping the labels of randomly selected 
equal size subsets of both the minority and majority classes.

Consider Figure \ref{fig:binary-exp}, where we show performance levels of the various measures
on a performance-imbalance plane for several noise levels.
For example, in \ref{fig:binary-exp}(a) we consider 20\% label noise. It is evident that the PA (green curve) lower bounds all
other measures. Moreover, all performance measures are always montonically ordered 
as follows: the PA (green) is the lowest, and above it are Cohen's kappa (purple) , the F-measure (turqoise) , the balanced accuracy (red), and accuracy (blue). It is possible to prove that
this order always holds.
The interesting region in this graph is the top left quadrant defined by the intersection of the 
PA with zero. In this region, all the other measures indicate that the classifier at hand is 
useful even though it in fact achieves lower performance than the BMP. 
While Cohen's kappa can detect some of this triviality, it also falls into this trap in many 
cases. 
We can also see that when the problem is balanced, the PA and Cohen's kappa are equal and, when the problem has any level of noise and imbalance, the PA is strictly lower than all the other measures and will be the first to detect trivial solutions.
Taken together with the proofs given in Section~\ref{sec:analysis} (showing relationships of the PA to 
all the other measures),
the above support the claim that the PA is a favorable measure for imbalance problems.

Recall the numerical example from Section~\ref{sec:introduction}. We use the prediction advantage to measure the performance of Bob and Alice in the exam. Bob received a grade of 60 on exam A where there were three optional answers for each question. His loss is $\ell(Bob(A))=0.4$. The loss of the BMP for this test is $\ell(f_0(A))=\frac{2}{3}$ , leading to a PA for Bob of $PA_{Bob}=1-0.4/\frac{2}{3}=0.4$. Calculating the same for Alice on the second test yields $\ell(Alice(B))=0.4$, $\ell(f_0(B))=0.75$, and the PA is $PA_{Alice}=1-0.4/0.75=\frac{7}{15}=0.46$. We see that when using the PA, the grades that were indistinguishable  are now distinguishable: the PA takes into account the complexity of the problem represented by the marginal distribution of $Y$, and by doing so gives more information. 
\section{Concluding Remarks}
We proposed a general performance measure to quantify prediction quality.
The measure  quantifies prediction quality regardless of 
problem-dependent distortions such as class imbalance, noise, variance and number of classes.
Unlike previous methods, the proposed measure is defined for any loss function,
and we derived simple formulas for all popular loss functions.
In the case of the squared loss function, the well-known $R$-squared measure emerged.

One reason for the popularity of the $R$-squared measure in regression analysis is its ability to 
measure performance in a ``unified'' manner across problems. This attractive property is obtained 
by normalizing the risk of the regression function by the variance of the marginal outcome distribution.
PA generalizes this attractive property of the $R$-squared measure to any supervised learning problem under any
loss function and enjoys the same uniformity property across problems.
In general, the normalizing factor of any problem is captured by the performance of the Baysian
marginal prediction (BMP). For non-trivial solutions, the PA uniformly quantifies 
the quality in percentage units on a scale where 0 represents triviality (BMP) and 1 represents
perfect prediction. PA has no disadvantages whatsoever even if the problem is binary and perfectly balanced.
To prevent trivial solutions and allow for comparisons across problems, we therefore recommend using the PA in all cases.

While in this paper we advocate the use of the PA mainly as a means to prevent triviality in imbalanced problems, a very important application of the PA would be \emph{selective prediction}
(otherwise known as classification with a reject option) 
\cite{chow1970optimum,el2010foundations,wiener2015agnostic,2017GelbhartE}, where the goal 
is to abstain over a subset of the domain so as to minimize risk in the covered region.
In this setting, the PA is essential.
For example, consider an imbalanced problem with a minority class consisting of 
20\% of the domain. Two selective classifiers were trained to cover 70\% of the domain
and both achieved 5\% risk at this coverage.
It is not hard to see that it is meaningless to compare these classifiers based on risk alone because their class distribution is different; e.g., only one of these classifiers rejected the minority class 
in its entirety, thus giving a trivial solution. 
The interesting open question in this regard would be how to use the PA as an optimization criterion for learning optimal selective prediction functions.

\section*{Acknowledgments}
This research was supported by The Israel Science Foundation (grant No. 1890/14)

\bibliographystyle{plain}  
\bibliography{bib1}

\appendix

\section{Prediction Advantage on Haberman Dataset}
\label{app:haberman}
The widely used Haberman dataset contains cases from a study that was conducted between 1958 and 1970 at the University of Chicago's Billings Hospital on the survival of patients who had undergone surgery for breast cancer \cite{Lichman:2013}. 
The dataset represents a small binary classification problem where
the proportion of the minority class is  26.47\% . 
Quite a few papers 
reported on classifiers they have trained for this dataset whose 0/1 test errors 
are close or no better than the trivial classifier that always classifies according to the majority label. To convincingly demonstrate that this is indeed a widely occurring problem, in this 
appendix we briefly provide the details of a few such results.

\cite{wiener2011agnostic} used SVM with a reject option and reported on 0.27 0/1 error without any rejection; this result is equivalent to a PA of -0.02. 
\cite{huang2007correcting} trained a classifier whose error of 0.3 with a corresponding PA of -0.13. 
\cite{xiong2012random} reported on a classifier whose error is 0.273 (PA $=  -0.0313$), and \cite{chandra2009fuzzifying} reported on 0.2742 error whose PA is negative as well. \cite{ghanavati2014effective} compared several methods to learn imbalanced problems. A few of the proposed methods suffered 0/1 errors greater than 0.3; the use of the $F$-measure in this study 
couldn't help detecting these problematic results.
\cite{mcconnell2004building} reached 0/1 accuracy of 73.4 (0/1 error 0.266) corresponding again to a negative PA. \cite{juhola2013missing} reached accuracy below 70\% in almost all of their experiments, and the only method achieving a non-trivial solution was a naive Bayes classifier (in all the other cases the PA was negative). 
\cite{chandra2006robust} reported on a maximal 0/1 test accuracy of 71.7\% , and \cite{alabdulmohsin2014support} reported on 26.6\% error, both corresponding to
a negative prediction advantage.

\end{document}